\newcommand{\bh}{\mbox{\boldmath $h$}}
\newcommand{\bm}{\mbox{\boldmath $m$}}
\newcommand{\bs}{\mbox{\boldmath $s$}}
\newcommand{\bw}{\mbox{\boldmath $w$}}
\newcommand{\bx}{\mbox{\boldmath $x$}}
\newcommand{\by}{\mbox{\boldmath $y$}}
\newcommand{\bA}{\mbox{\boldmath $A$}}
\newcommand{\bI}{\mbox{\boldmath $I$}}
\newcommand{\bS}{\mbox{\boldmath $S$}}
\newcommand{\bU}{\mbox{\boldmath $U$}}
\newcommand{\bX}{\mbox{\boldmath $X$}}
\newcommand{\calD}{{\mathcal{D}}}
\newcommand{\calF}{{\mathcal{F}}}
\newcommand{\calL}{{\mathcal{L}}}
\newcommand{\calN}{{\mathcal{N}}}
\newcommand{\calO}{{\mathcal{O}}}
\newcommand{\Real}{\mathbb R}
\newcommand{\bzero}{\mbox{\boldmath $0$}}
\newcommand{\be}{\begin{eqnarray}}
\newcommand{\ee}{\end{eqnarray}}
\newcommand{\bee}{\begin{eqnarray*}}
\newcommand{\eee}{\end{eqnarray*}}
\newcommand{\matrixb}{\left[ \begin{array}}
\newcommand{\matrixe}{\end{array} \right]}   
\newtheorem{theorem}{{Theorem}}
\def\abovestrut#1{\rule[0in]{0in}{#1}\ignorespaces}
\def\belowstrut#1{\rule[-#1]{0in}{#1}\ignorespaces}
\def\abovespace{\abovestrut{0.15in}}
\def\belowspace{\belowstrut{0.10in}}
\ifcvprfinal\pagestyle{empty}\fi
\begin{document}

\title{Learning to Select Pre-trained Deep Representations with \\
Bayesian Evidence Framework}

\author{Yong-Deok Kim\thanks{This work
was done when Y. Kim and T. Jang were with POSTECH.} $^1$ ~~~ Taewoong Jang$^{*2}$ ~~~ Bohyung Han$^3$ ~~~ Seungjin Choi$^3$\\
$^1$Software R\&D Center, Device Solutions, Samsung Electronics, Korea\\
$^2$Stradvision Inc., Korea \\
$^3$Department of Computer Science and Engineering, POSTECH, Korea \\
{\tt\small yd.mlg.kim@samsung.com} ~
{\tt\small taewoong.jang@stradvision.com} ~
{\tt\small \{bhhan,seungjin\}@postech.ac.kr}
}

\maketitle
\thispagestyle{empty}

\begin{abstract}
We propose a Bayesian evidence framework to facilitate transfer learning from pre-trained deep convolutional neural networks (CNNs).
Our framework is formulated on top of a least squares SVM (LS-SVM) classifier, which is simple and fast in both training and testing, and achieves competitive performance in practice.
The regularization parameters in LS-SVM is estimated automatically without grid search and cross-validation by maximizing evidence, which is a useful measure to select the best performing CNN out of multiple candidates for transfer learning; the evidence is optimized efficiently by employing Aitken's delta-squared process, which accelerates convergence of fixed point update.
The proposed Bayesian evidence framework also provides a good solution to identify the best ensemble of heterogeneous CNNs through a greedy algorithm. 
Our Bayesian evidence framework for transfer learning is tested on 12 visual recognition datasets and illustrates the state-of-the-art performance consistently in terms of prediction accuracy and modeling efficiency.
\end{abstract}

\section{Introduction}
\label{sec:introduction}


Image representations from deep CNN models trained for specific image classification tasks  turn out to be powerful even for general purposes~\cite{AzizpourH2014arxiv,Chatfield2014bmvc,DonahueJ2014icml,OquabM2014cvpr,RazavianAS2014cvpr} and useful for transfer learning or domain adaptation.
Therefore, CNNs trained on specific problems or datasets are often fine-tuned to facilitate training for new tasks or domains~\cite{AzizpourH2014arxiv,Chatfield2014bmvc,girshick2014cvpr,Nam16,noh2015learning,ZhangN2014eccv}, and an even simpler approach---application of off-the-shelf classification algorithms such as SVM to the representations from deep CNNs~\cite{DonahueJ2014icml}---is getting more attractive in many computer vision problems.
However, fine-tuning of an entire deep network still requires a lot of efforts and resources, and SVM-based methods also involve time consuming grid search and cross validation to identify good regularization parameters.
In addition, when multiple pre-trained deep CNN models are available, it is unclear which pre-trained models are appropriate for target tasks and which classifiers would maximize accuracy and efficiency.
Unfortunately, most existing techniques for transfer learning or domain adaptation are limited to empirical analysis or ad-hoc application specific approaches.

\begin{figure}[t]
\centering
\includegraphics[width=1\linewidth]{./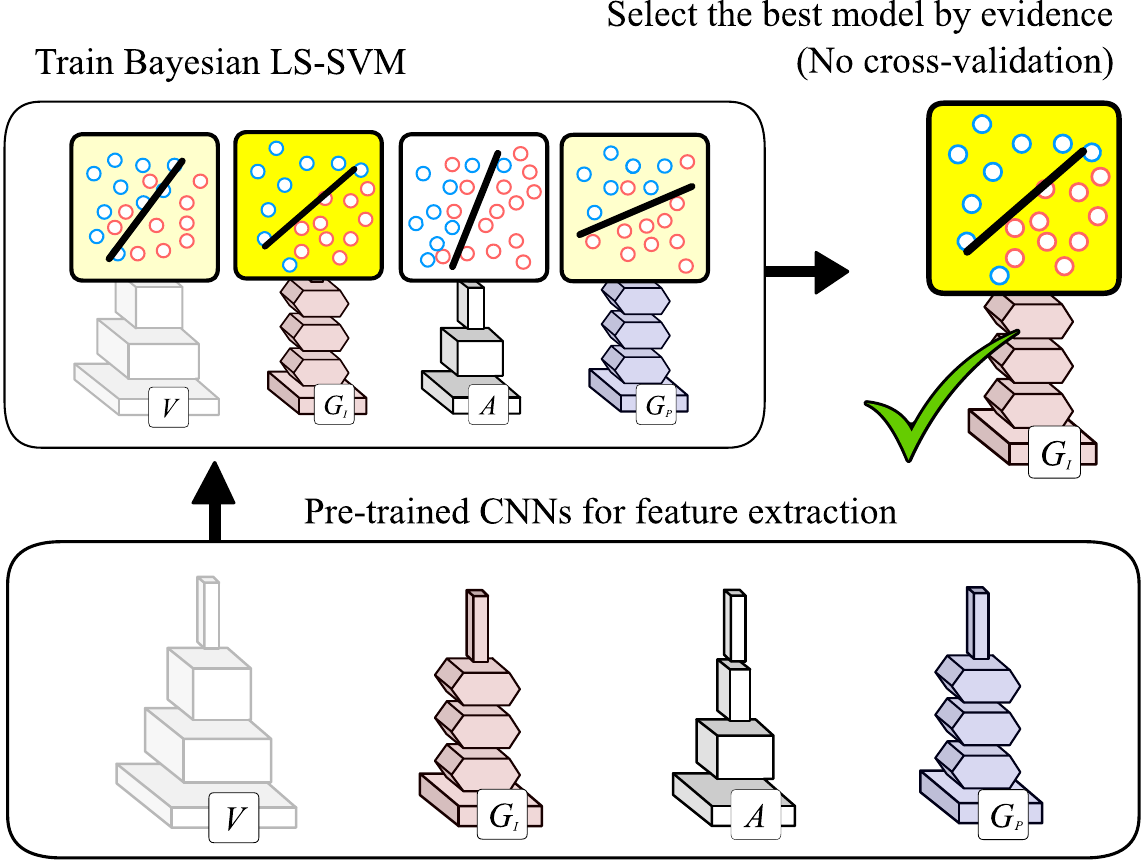}
\caption{We address a problem to select the best CNN out of multiple candidates as shown in this figure.
Additionally, our algorithm is capable of identifying the best ensemble of multiple CNNs to further improve performance.}
\label{fig:overview}
\end{figure}
We propose a simple but effective algorithm for transfer learning from pre-trained deep CNNs based on Bayesian least squares SVM (LS-SVM), which is formulated with Bayesian evidence framework~\cite{MacKayDJC92nc,GestelTV2002nc} and LS-SVM~\cite{SuykensJAK99npl}.
This approach automatically determines regularization parameters in a principled way, and shows comparable performance to the standard SVMs based on hinge loss or squared hinge loss.
More importantly, Bayesian LS-SVM provides an effective solution to select the best CNN out of multiple candidates and identify a good ensemble of heterogeneous CNNs for performance improvement.
Figure~\ref{fig:overview} illustrates our approach.
We also propose a fast Bayesian LS-SVM, which maximizes the evidence more efficiently based on Aitken's delta-squared process~\cite{AitkenAC1927PRSE}.

One may argue against the use of LS-SVM for classification because the least squares loss function in LS-SVM tends to penalize well-classified examples.
However, least squares loss is often used for training multilayer perceptron~\cite{BishopCM95book} and shows comparable performance to SVMs~\cite{GestelTV2004mlj,ZhangP2004icpr}.
In addition, Bayesian LS-SVM provides a technically sound formulation with outstanding performance in terms of speed and accuracy for transfer learning with deep representations.
We also propose a fast Bayesian LS-SVM, which maximizes the evidence more efficiently based on Aitken�s delta-squared process~\cite{AitkenAC1927PRSE}. 
Considering simplicity and accuracy, we claim that our fast Bayesian LS-SVM is a reasonable choice for transfer learning with deep learning representation in visual recognition problems.
Based on this approach, we achieved promising results compared to the state-of-the-art techniques on 12 visual recognition tasks.

The rest of this paper is organized as follows.
Section~\ref{sec:related} describes examples of transfer learning or domain adaptation based on pre-trained CNNs for visual recognition problems.
Then, we discuss Bayesian evidence framework applicable to the same problem in Section~\ref{sec:bayesian} and its acceleration technique using Aitken's delta-squared process in Section~\ref{sec:fast}.
The performance of our algorithm in various applications is demonstrated in Section~\ref{sec:experiments}.

\section{Related Work}
\label{sec:related}


Since \emph{AlexNet}~\cite{KrizhevskyA2012nips} demonstrated impressive performance in the ImageNet large scale visual recognition challenge (LSVRC) 2012, a few deep CNNs with different architectures, \eg, {\em VGG}~\cite{SimonyanK2014arxiv} and {\em GoogLeNet}~\cite{SzegedyC2014arxiv}, have been proposed in the subsequent events.
%
Instead of training deep CNNs from scratch, some people have attempted to refine pre-trained networks for new tasks or datasets by updating the weights of all neurons or have adopted the intermediate outputs of existing deep networks as generic visual feature descriptors.
These strategies can be interpreted as transfer learning or domain adaptation.

Refining a pre-trained CNN is called fine-tuning, where the architecture of the network may be preserved while weights are updated based on new training data.
Fine-tuning is generally useful to improve performance~\cite{AzizpourH2014arxiv,Chatfield2014bmvc,girshick2014cvpr,ZhangN2014eccv} but requires careful implementation to avoid overfitting.
The second approach regards the pre-trained CNNs as feature extraction machines and combines the deep representations with the off-the-shelf classifiers such as linear SVM~\cite{DonahueJ2014icml,ZeilerMD2014eccv}, logistic regression~\cite{DonahueJ2014icml,ZeilerMD2014eccv}, and multi-layer neural network~\cite{OquabM2014cvpr}.
The techniques in this category have been successful in many visual recognition tasks~\cite{AzizpourH2014arxiv,RazavianAS2014cvpr,SermanetP2014iclr}.

When combining a classification algorithm with image representations from pre-trained deep CNNs, we often face a critical issue.
Although several deep CNN models trained on large scale image repositories are publicly available, there is no principled way to select a CNN out of multiple candidates and find the best ensemble of multiple CNNs for performance optimization.
Existing algorithms typically rely on ad-hoc methods for model selection and fail to provide clear evidence for superior performance~\cite{AzizpourH2014arxiv}.

{\color{red}
}

\section{Bayesian LS-SVM for Model Selection}
\label{sec:bayesian}
This section discusses a Bayesian evidence framework to select the best CNN model(s) in the presence of transferable multiple candidates and identify a reasonable regularization parameter for LS-SVM classifier automatically.

\subsection{Problem Definition and Formulation}
\label{sub:problem}
Suppose that we have a set of pre-trained deep CNN models denoted by $\{\mathsf{CNN}_m | m=1 \dots M \}$.
Our goal is to identify the best performing deep CNN model among the $M$ networks for transfer learning.
A na{\"i}ve approach is to perform fine tuning of network for target task, which requires substantial efforts for training.
Another option is to replace some of fully connected layers in a CNN with an off-the-shelf classifier such as SVM and check the performance of target task through parameter tuning for each network, which would also be computationally expensive.

We adopt a Bayesian evidence framework based on LS-SVM to achieve the goal in a principled way, where the evidence of each network is maximized iteratively and the maximum evidences are used to select a reasonable model.
During the evidence maximization procedure, the regularization parameter of LS-SVM is identified automatically without time consuming grid search and cross-validation.
In addition, the Bayesian evidence framework is also applied to the construction of an ensemble of multiple CNNs to accomplish further performance improvement.

\subsection{LS-SVM}
\label{sub:ls}
We deal with multi-label or multi-class classification problem, where the number of categories is $K$.
Let $\calD = \{ (\bx_n, y_n^{(k)}), k=1 \ldots K\}_{n = 1 \ldots N}$ be a training set, where $\bx_n \in \Real^D$ is a feature vector and $y_n^{(k)}$ is a binary variable that is set to 1 if label $k$ is given to $\bx_n$ and 0 otherwise.
Then, for each class $k$, we minimize a least squares loss with $L_2$ regularization penalty as follows:
\be
\label{eq:ls}
\min_{\bw^{(k)} \in \Real^D} \| \by^{(k)} - \bX^{\top} \bw^{(k)} \|^2 + \lambda^{(k)} \| \bw^{(k)} \|^2,
\ee
where $\bX = [\bx_1, \ldots, \bx_n] \in \Real^{D \times N}$ and $\by^{(k)} = [ y_1^{(k)}, \ldots, y_N^{(k)}]^{\top} \in \Real^{N}$.
The optimal solution of the problem in \eqref{eq:ls} is given by
\be
\label{eq:ls_solution}
\bw^{(k)} &=& (\bX\bX^{\top} + \lambda^{(k)} \bI)^{-1}\bX\by^{(k)}, \nonumber \\ 
&=& \bU(\bS + \lambda^{(k)} \bI)^{-1} \bU^{\top} \bX \by^{(k)},
\ee
where $\bU \bS \bU^{\top}$ is the eigen-decomposition of $\bX\bX^{\top}$ 
and $\bI$ is an identity matrix.
This regularized least squares approach has clear benefit that it requires only one eigen-decomposition of $\bX\bX^{\top}$ to obtain the solution in \eqref{eq:ls_solution} for all combinations of $\lambda^{(k)}$ and $\by^{(k)}$.

\subsection{Bayesian Evidence Framework}
\label{sub:bayesian}
The optimization of the regularized least squares formulation presented in \eqref{eq:ls} is equivalent to the maximization of the posterior with fixed hyperparamters $\alpha$ and $\beta$ denoted by $p(\bw |\by,\bX,\alpha,\beta)$, where $\lambda = \alpha / \beta$.
The posterior can be decomposed into two terms by Bayesian theorem as
\begin{align}
p(\bw |\by,\bX,\alpha,\beta) \propto p(\by | \bX, \bw, \beta) p(\bw | \alpha),
\label{eq:posterior}
\end{align}
where $p(\by | \bX, \bw, \beta)$ corresponds to Gaussian observation noise model given by
\begin{align}
\label{eq:likelihood}
p(\by | \bX, \bw, \beta) = \prod_{n=1}^N \calN (y_n | \bx_n^{\top} \bw, \beta^{-1} )
\end{align}
and
$p(\bw | \alpha)$ denotes a zero-mean isotropic Gaussian prior as
\begin{align}
\label{eq:prior}
p(\bw | \alpha) = \calN(\bw | \bzero, \alpha^{-1} \bI).
\end{align}
Note that we dropped superscript $(k)$ for notational simplicity from the equations in this subsection.

In the Bayesian evidence framework~\cite{MacKayDJC92nc,GestelTV2002nc}, the evidence, also known as marginal likelihood, is a function of hyperparameters $\alpha$ and $\beta$ as
\be
p(\by | \bX, \alpha, \beta) = \int p(\by | \bX, \bw, \beta) p(\bw | \alpha) d\bw.
\label{eq:marginal}
\ee
Under the probabilistic model assumptions corresponding to \eqref{eq:likelihood} and \eqref{eq:prior},
the log evidence $\calL(\alpha,\beta)$ is given by
\begin{align}
\calL(\alpha,\beta) \equiv~& \lefteqn{\log p(\by | \bX, \alpha, \beta)} \\
=~& \frac{D}{2} \log \alpha + \frac{N}{2} \log \beta - \frac{1}{2}\log |\bA| \nonumber \\
&- \frac{\beta}{2} \| \by - \bX^{\top} \bm \|^2
- \frac{\alpha}{2} \bm^{\top} \bm - \frac{N}{2} \log 2\pi, \nonumber
\end{align}
where the precision matrix and mean vector of the posterior $p(\bw | \by, \bX, \alpha, \beta) = \calN(\bw | \bm, \bA^{-1})$ are given respectively by
\be
\bA = \alpha \bI + \beta \bX\bX^{\top} ~~\text{and}~~
\bm = \beta \bA^{-1} \bX \by. \nonumber
\ee

The log evidence $\calL(\alpha,\beta)$ is maximized by repeatedly alternating the following fixed point update rules
\be
\alpha = \frac{\gamma}{\bm^{\top}\bm} ~~~\text{and} ~~~ \beta = \frac{N - \gamma}{\| \by - \bX^{\top}\bm \|^2},
\label{eq:alpha_beta_fixed}
\ee
which involves the derivation of $\gamma$ as
\be
\label{eq:gamma}
\gamma = \sum_{d=1}^D \frac{ \beta s_d}{\alpha + \beta s_d } = \sum_{d=1}^D \frac{ s_d}{\lambda + s_d },
\ee
where $\{s_d\}_{d=1}^D$ are eigenvalues of $\bX\bX^{\top}$.
Note that $\bm$ and $\gamma$ should be re-estimated after each update of $\alpha$ and $\beta$.
{

Another pair of update rules of $\alpha$ and $\beta$ are derived by an expectation-maximization (EM) technique as
\be
\label{eq:alpha_em}
\alpha & = & \frac{D}{\bm^{\top}\bm + \mbox{Tr}(\bA^{-1})} ~~\text{and} \\
\label{eq:beta_em}
\beta & = & \frac{N}{\| \by - \bX^{\top}\bm \|^2 + \mbox{Tr}(\bA^{-1}\bX\bX^{\top})},
\ee
but these procedures are substantially slower than the fixed point update rules in \eqref{eq:alpha_beta_fixed}.
}

Through the optimization procedures described above, we determine the regularization parameter $\lambda = \alpha / \beta$.
Although the estimated parameters are not optimal, they may still be reasonable solutions since they are obtained by maximizing marginal likelihood in \eqref{eq:marginal}.

\subsection{Model Selection using Evidence}
\label{sub:model}
The evidence computed in the previous subsection is for a single class, and the overall evidence for entire classes, denoted by $\mathcal{L}^*$, is obtained by the summation of the evidences from individual classes, which is given by
\begin{align}
\mathcal{L}^* = \sum_{k=1}^K \mathcal{L}(\alpha^{(k)}, \beta^{(k)}).
\label{eq:overall_evidence}
\end{align}

We compute the overall evidence corresponding to each deep CNN model, and choose the model with the maximum evidence for transfer learning.
We expect that the selected model performs best among all candidates, which will be verified in our experiment.

In addition, when an ensemble of deep CNNs needs to be constructed for a target task, our approach selects a subset of good pre-trained CNNs in a greedy manner.
Specifically, we add a network with the largest evidence in each stage and test whether the augmented network improves the evidence or not.
The network is accepted if the evidence increases, or rejected otherwise.
After the last candidate is tested, we obtain the final network combination and its associated model learned with the concatenated feature descriptors from accepted networks.


\section{Fast Bayesian LS-SVM}
\label{sec:fast}
Bayesian evidence framework discussed in Section~\ref{sec:bayesian} is useful to identify a good CNN for transfer learning and a reasonable regularization parameter.
To make this framework even more practical, we present a faster algorithm to accomplish the same goal and a new theory that guarantees the converges of the algorithm.

\subsection{Reformulation of Evidence}
\label{sub:reformulation}
We are going to reduce $\calL(\alpha,\beta)$ to a function with only one parameter that directly corresponds to the regularization parameter $\lambda = \alpha / \beta$.
To this end, we re-write $\calL(\alpha,\beta)$ by using the eigen-decomposition $\bX\bX^{\top} = \bU \bS \bU^{\top}$ as
\begin{align}
\calL(\alpha,\beta) 
&= \frac{D}{2} \log \alpha + \frac{N}{2} \log \beta
- \frac{1}{2} \sum_{d=1}^D \log (\alpha + \beta s_d) \nonumber \\
&  -\frac{\beta}{2} \by^{\top}\by
 + \frac{\beta^2}{2} \sum_{d=1}^D \frac{h_d^2}{\alpha + \beta s_d} - \frac{N}{2} \log 2\pi,
\end{align}
where $s_d$ is the $d$-th diagonal element in $\bS$ and $h_d$ denotes the $d$-th element in $\bh = \bU^{\top}\bX\by$.
Then, we re-parameterize $\calL(\alpha,\beta)$ into $\calF(\lambda,\beta)$ as
\begin{align}
\label{eq:re-parameter}
{\calF(\lambda,\beta)} &= \frac{D}{2}\log \lambda + \frac{N}{2} \log \beta
- \frac{1}{2} \sum_{d=1}^D \log (\lambda + s_d) \nonumber \\
& - \frac{\beta}{2} \left(\by^{\top}\by -\sum_{d=1}^D \frac{h_d^2}{\lambda+s_d}\right ) - \frac{N}{2} \log 2 \pi.
\end{align}
{
The derivative of $\calF(\lambda, \beta)$ with respect to $\beta$ is given by
\bee
\frac{\partial \calF}{\partial \beta}
= \frac{N}{2\beta} - \frac{1}{2}\left(\by^{\top}\by -\sum_{d=1}^D \frac{h_d^2}{\lambda+s_d}\right ),
\eee
and we obtain the following equation by setting this derivative to zero,
}
\be
\label{eq:beta}
\beta = \frac{N}{\by^{\top}\by -\sum_{d=1}^D \frac{h_d^2}{\lambda+s_d}}.
\ee
Finally, we obtain a one-dimensional function of the log evidence by plugging \eqref{eq:beta} into \eqref{eq:re-parameter}, which is given by
\begin{align}
\label{eq:final_form}
\calF(\lambda) ~=&~ \frac{1}{2}\sum_{d=1}^D \log \frac{\lambda}{\lambda+s_d} + \frac{N}{2} \log N
- \frac{N}{2} -\frac{N}{2} \log 2\pi \nonumber \\
& - \frac{N}{2} \log\left(\by^{\top}\by -  \sum_{d=1}^D \frac{h_d^2}{\lambda + s_d}\right).
\end{align}
Figure~\ref{fig:log_evidence} illustrates the curvature of this log evidence function with respect to $\log \lambda$.
\begin{figure}[t]
\centering
\includegraphics[width=0.95\linewidth]{./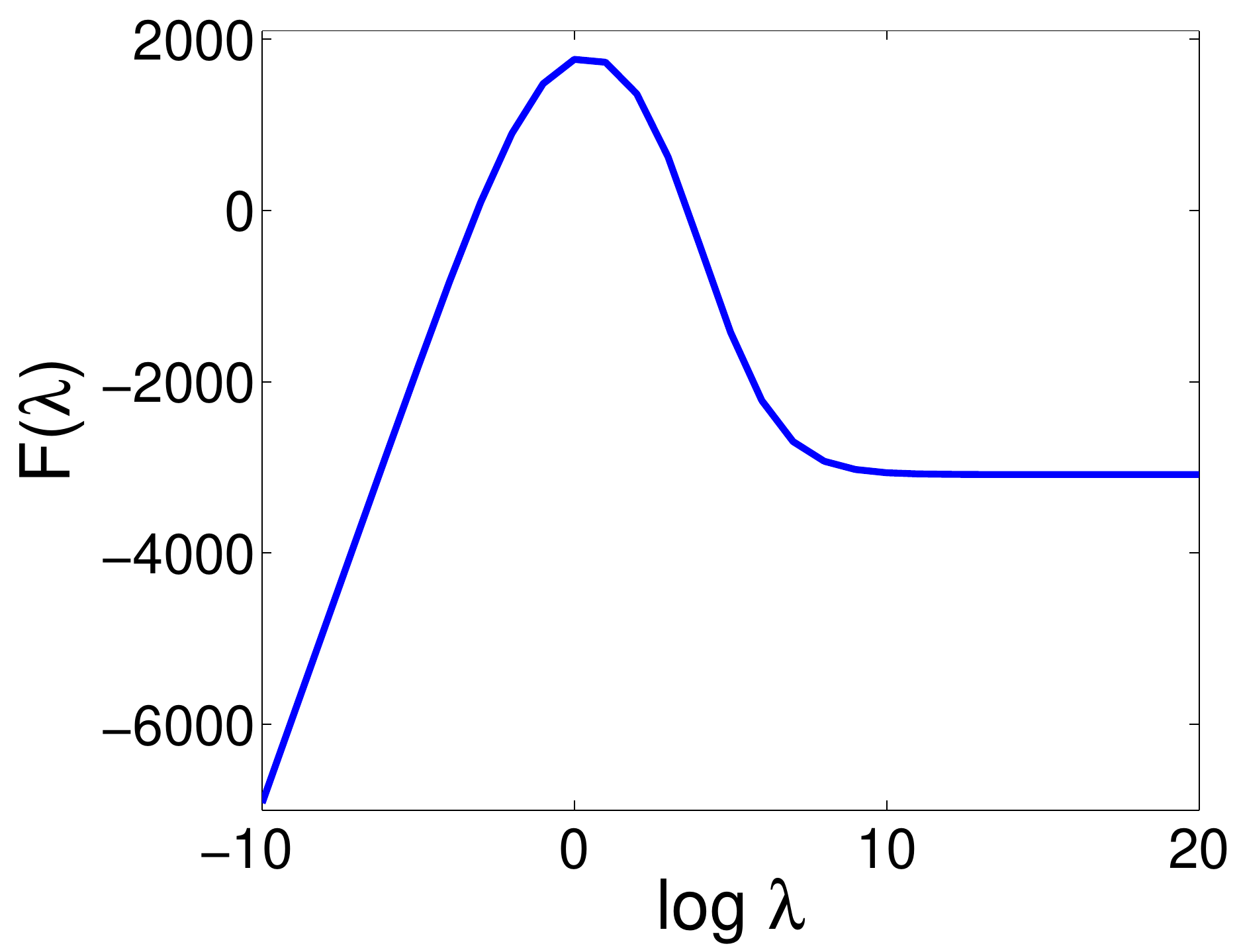}
\caption{Plot of the log evidence $\calF(\lambda)$ with respect to $\log \lambda$.
Note that $\calF(\lambda)$ is neither convex nor concave. 
}
\label{fig:log_evidence}
\end{figure}

\subsection{New Fixed-point Update Rule}
We now derive a new fixed point update rule and present the sufficient condition for the existence of a fixed point.
The stationary points in \eqref{eq:final_form} with respect to $\lambda$ satisfy
\begin{align}
\frac{1}{2} \sum_{d=1}^D  \frac{s_d}{\lambda(\lambda + s_d)}
- \frac{N}{2} \frac{\sum_{d=1}^D \frac{h_d^2}{(\lambda+s_d)^2}}{\by^{\top}\by - \sum_{d=1}^D \frac{h_d^2}{\lambda+s_d}} = 0,
\end{align}
%
and we update the fixed-point by maximizing (\ref{eq:final_form}) as
\be
\label{eq:update_lambda}
\lambda = \frac{\sum_{d=1}^D\frac{s_d}{\lambda + s_d}}
{\left( \frac{N}{\by^{\top}\by - \sum_{d=1}^D h_d^2 / (\lambda+s_d)} \right)
\left( \sum_{d=1}^D \frac{h_d^2}{(\lambda+s_d)^2}\right)} .
\ee
%
%
As illustrated in Figure~\ref{fig:log_evidence}, $\mathcal{F}(\lambda)$ in \eqref{eq:final_form} is neither convex nor concave as illustrated in the supplementary file.
However, we can show the sufficient condition of the existence of the fixed point using the following theorem. 
\begin{theorem}
Denote the update rule in \eqref{eq:update_lambda} by $f(\lambda)$.
If $\by$ is a binary variable and $\bx_n$ is an $L_2$ normalized nonnegative vector, then $f(\lambda)$ has a fixed point.
\end{theorem}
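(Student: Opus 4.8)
The plan is to exhibit a fixed point of $f$ via the intermediate value theorem, applied to the continuous function $g(\lambda) = f(\lambda) - \lambda$ on $(0,\infty)$. First I would observe that $f(\lambda)$ is well-defined and continuous for $\lambda \in (0,\infty)$: the numerator $\sum_d s_d/(\lambda+s_d)$ is positive (assuming $\bX\bX^\top$ is not identically zero, so some $s_d > 0$), and in the denominator the factor $\sum_d h_d^2/(\lambda+s_d)^2 > 0$ whenever $\bh \neq \bzero$, while the factor $N/(\by^\top\by - \sum_d h_d^2/(\lambda+s_d))$ is positive because $\by^\top\by - \sum_d h_d^2/(\lambda+s_d) = \by^\top\by - \by^\top\bX^\top\bU(\bS+\lambda\bI)^{-1}\bU^\top\bX\by = \beta^{-1}N$ from \eqref{eq:beta}, and this is a strictly positive quantity (it equals the residual-plus-regularization energy at the ridge solution, which is positive). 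Hence $f(\lambda) > 0$ on $(0,\infty)$, so in particular $g(\lambda) = f(\lambda) - \lambda > 0$ for $\lambda$ near $0^+$; equivalently $\lim_{\lambda\to 0^+} f(\lambda) > 0 = \lim_{\lambda\to0^+}\lambda$, so $g$ is positive near the left endpoint.

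Next I would analyze the behavior as $\lambda \to \infty$ to show $g(\lambda) < 0$ eventually, i.e. $f(\lambda)$ grows strictly slower than $\lambda$. As $\lambda\to\infty$: the numerator $\sum_d s_d/(\lambda+s_d) \to 0$ like $(\sum_d s_d)/\lambda = \tr(\bX\bX^\top)/\lambda$; the factor $\sum_d h_d^2/(\lambda+s_d)^2 \to 0$ like $\|\bh\|^2/\lambda^2$; and $\by^\top\by - \sum_d h_d^2/(\lambda+s_d) \to \by^\top\by$, a positive constant. Therefore $f(\lambda) \sim \dfrac{\tr(\bX\bX^\top)/\lambda}{(N/\by^\top\by)\cdot(\|\bh\|^2/\lambda^2)} = \dfrac{\tr(\bX\bX^\top)\,\by^\top\by}{N\,\|\bh\|^2}\,\lambda$, which is linear in $\lambda$ with a constant coefficient. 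So I cannot conclude $f(\lambda) - \lambda < 0$ unless that coefficient is strictly less than $1$, i.e. unless
\begin{align}
\tr(\bX\bX^\top)\,\by^\top\by < N\,\|\bh\|^2 = N\,\|\bU^\top\bX\by\|^2 = N\,\by^\top\bX^\top\bX\by. \nonumber
\end{align}
This is precisely where the hypotheses — $\by$ binary and each $\bx_n$ an $L_2$-normalized nonnegative vector — must be used. I would prove this inequality directly: since $\bx_n$ is $L_2$-normalized, $\tr(\bX\bX^\top) = \sum_n \|\bx_n\|^2 = N$, so the claim reduces to $\by^\top\by \le \by^\top\bX^\top\bX\by$ (with strict inequality, or at least non-strict plus a separate argument at equality). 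Writing $\by^\top\bX^\top\bX\by = \sum_{n,m} y_n y_m \langle \bx_n,\bx_m\rangle = \|\sum_n y_n \bx_n\|^2$ and noting $\by^\top\by = \sum_n y_n^2 = \sum_n y_n = \sum_{n:y_n=1}1$ (binary), while $\langle \bx_n,\bx_m\rangle \ge 0$ for all $n,m$ (nonnegativity) and $\langle\bx_n,\bx_n\rangle = 1$, one gets $\by^\top\bX^\top\bX\by = \sum_{n:y_n=1}\sum_{m:y_m=1}\langle\bx_n,\bx_m\rangle \ge \sum_{n:y_n=1}\langle\bx_n,\bx_n\rangle = \by^\top\by$. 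Thus the asymptotic slope is $\ge 1$, and I would need to rule out the degenerate case of slope exactly $1$ (which forces all cross terms $\langle\bx_n,\bx_m\rangle$ with $y_n=y_m=1$ to vanish; in that case examine the next-order term, or handle it as an edge case) — alternatively, argue that even with slope $\ge 1$ one can still get a sign change by a more careful comparison at finite $\lambda$, since $f$ approaches its linear asymptote from below when the relevant lower-order coefficients have the right sign.

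The main obstacle is this asymptotic-slope comparison: the IVT argument needs $g(\lambda)=f(\lambda)-\lambda$ to change sign, and while positivity near $0$ is easy, negativity for large $\lambda$ hinges on showing the effective linear slope of $f$ at infinity is at most $1$ — which is exactly the content delivered by "$\by$ binary, $\bx_n$ nonnegative and $L_2$-normalized" via $\tr(\bX\bX^\top)=N$ and $\by^\top\bX^\top\bX\by\ge\by^\top\by$. Once the slope inequality (and the resolution of the equality case, e.g. by passing to the $O(1/\lambda^2)$ correction term or by noting $h_d^2 \le s_d\,\by^\top\by$ termwise so that $f(\lambda)\le\lambda$ identically under these hypotheses) is in hand, continuity of $g$ plus the sign change gives a root $\lambda^\star\in(0,\infty)$ with $f(\lambda^\star)=\lambda^\star$, completing the proof. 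I would close by remarking that this $\lambda^\star$ is a stationary point of $\calF(\lambda)$ in \eqref{eq:final_form}, as the fixed-point equation \eqref{eq:update_lambda} was derived from $\calF'(\lambda)=0$.
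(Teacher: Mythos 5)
Your proposal follows essentially the same route as the paper's proof: compute the asymptotic slope of $f(\lambda)$ as $\lambda\to\infty$, reduce it to $\|\by\|^2\|\bX\|_F^2$ versus $N\|\bX\by\|^2$, use $L_2$ normalization to get $\|\bX\|_F^2=N$ and binariness plus nonnegativity to bound $\|\bX\by\|^2$ from below by $P=\sum_n y_n$, and conclude by the intermediate value theorem from $f(0)>0$. Two small remarks: your sentence ``the asymptotic slope is $\ge 1$'' is a slip for ``$\le 1$'' (the slope is $\by^{\top}\by/\by^{\top}\bX^{\top}\bX\by$, and the surrounding text makes clear you mean $\le 1$), and your concern about the equality case is legitimate --- the paper simply asserts the strict inequality $\|\bX\by\|^2> P$, which in fact fails when the positive examples have pairwise orthogonal features (e.g.\ $P=1$), so your proposed handling of that degenerate case is a point where you are more careful than the published argument.
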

\begin{proof}
We first show that $f(\lambda)$ is asymptotically linear in $\lambda$ as
\begin{align}
\lim_{\lambda \rightarrow \infty} \frac{f(\lambda)}{\lambda}
&= \lim_{\lambda \rightarrow \infty}
\frac{\left(\by^{\top}\by - \sum_{d=1} \frac{h_d^2}{\lambda+s_d} \right)
\sum_{d=1}^D \frac{s_d}{\lambda + s_d}}
{\lambda N \sum_{d=1}^D \frac{h_d^2}{(\lambda+s_d)^2}}  \nonumber \\
&= \frac{\by^{\top}\by \sum_{d=1}^D s_d}{ N \sum_{d=1}^D h_d^2}
= \frac{\|\by\|^2 \|\bX \|_F^2}{N \|\bX\by\|^2}. \nonumber
\end{align}
Since $\by$ is binary and $\bx_n$ is $L_2$ normalized and nonnegative, we can derive the following two relations,
\be
\hspace{-0.7cm} \|\by\|^2 \|\bX \|_F^2 \hspace{-0.2cm} &=& \hspace{-0.2cm} P N ~~\text{and} \label{eq:pn} \\
\hspace{-0.7cm} \| \bX \by \|^2 &=& \left( \sum_{n: y_n = 1} x_n \right)^2 > \sum_{n: y_n = 1} x_n^2 = P, \label{eq:p}
\ee
where $P = \sum_{n=1}^N y_n$.
From \eqref{eq:pn} and \eqref{eq:p}, it is shown that $ \|\by\|^2 \|\bX \|_F^2 < N \|\bX\by\|^2$.

Obviously, $f(0) > 0$ and there exists a $\lambda^{+}$ such that $f(\lambda^{+}) < \lambda^{+}$.
The intermediate value theorem implies the existence of $\lambda^{\ast}$ such that $f(\lambda^{\ast}) = \lambda^{\ast}$, where $0 < \lambda^{\ast} < \lambda^+$ as illustrated in Figure~\ref{fig:aitken}.
\end{proof}

%

%
\begin{figure}[t]
\centering
\includegraphics[width=0.9\linewidth]{./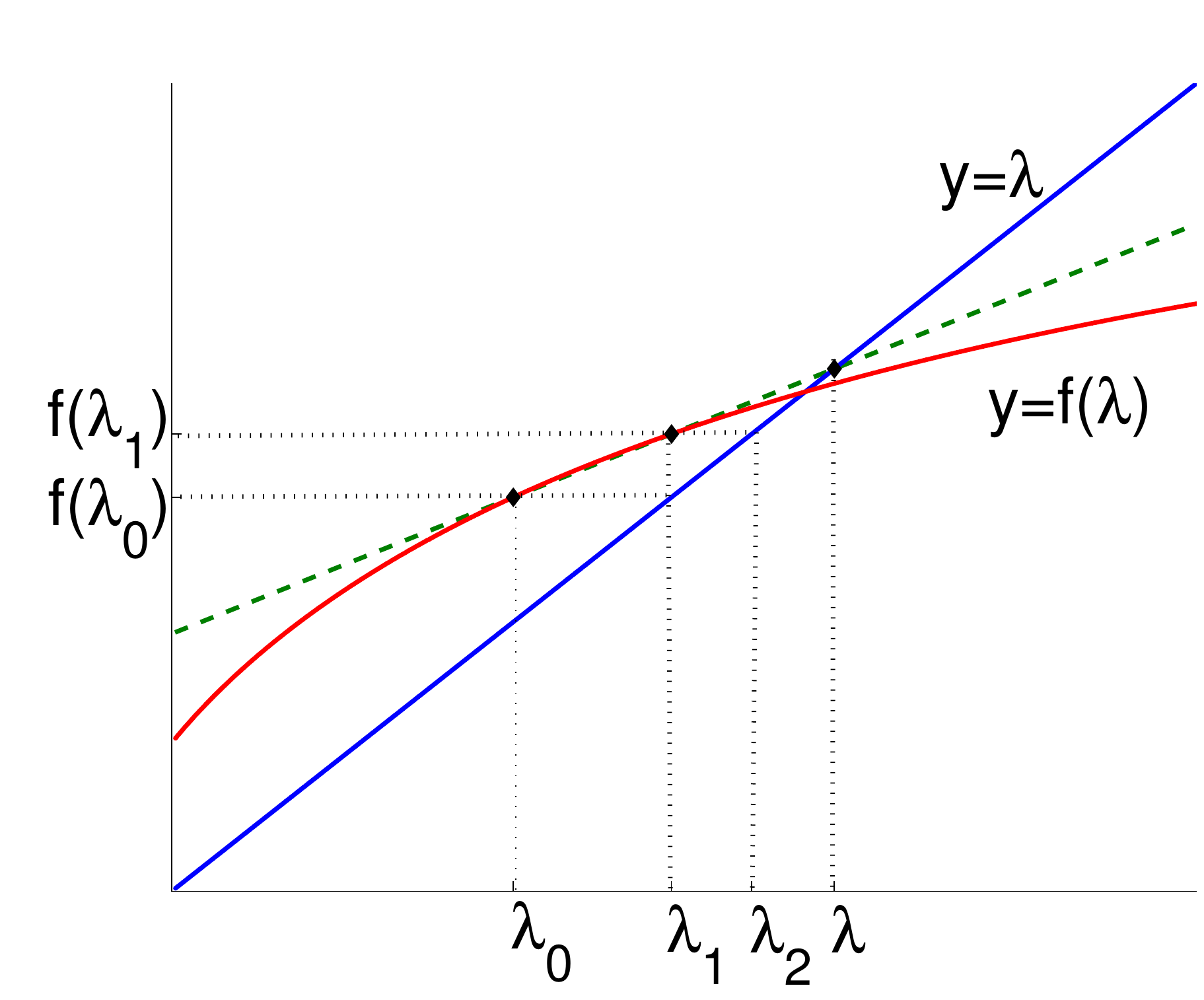}
\caption{Aitken's delta-squared process. The fixed point update function $f(\lambda)$
is approximated by green dashed line and its intersection with $y=\lambda$ becomes
the next update point.}
\label{fig:aitken}
\end{figure}
The fixed point is unique if $f(\lambda)$ is concave.
Although it is always concave according to our observation, we have no proof yet 
and leave it as a future work


\subsection{Speed Up Algorithm}
\begin{algorithm}[t]
\begin{algorithmic}
   \caption{Fast Bayesian Least Squares}
   \label{alg:FBLS}
   \STATE {\bfseries Input: } $\bX \in \Real^{N \times D}$ and $\by \in \Real^N$.
   \STATE {\bfseries Output: } Optimal solutions $(\bw,\lambda)$.
   \STATE Initialize $\lambda$  ~~~~~~~~~~~~~~~~~// {\it e.g.}, $\lambda = 1$
   \STATE $(\bU,\bS) \leftarrow $ eigen-decomposition$(\bX\bX^{\top})$
   \STATE $\bs \leftarrow$ diag$(\bS), ~~\bh \leftarrow \bU^{\top}\bX\by$
   \REPEAT
   \STATE $\lambda_0 \leftarrow \lambda$
   \STATE $\lambda_1 \leftarrow$ UPDATE $(\lambda_0,\bs,\bh,N,\by^{\top}\by)$
   \STATE $\lambda_2 \leftarrow$ UPDATE $(\lambda_1,\bs,\bh,N,\by^{\top}\by)$
   \STATE $\lambda \leftarrow \lambda_0 - \frac{(\lambda_1 - \lambda_0)^2}{(\lambda_2-\lambda_1)-		(\lambda_1 - \lambda_0)}$
   \IF{$\lambda < 0$ or $\lambda = \pm \infty$}
   \STATE $\lambda \leftarrow \lambda_2$
   \ENDIF
   \UNTIL { $|\lambda - \lambda_0| < \epsilon$ ~~~// {\it e.g.}, $\epsilon = 10^{-5}$   }
   \STATE $\bw \leftarrow \bU (\bS + \lambda \bI)^{-1} \bh$
\end{algorithmic}
\end{algorithm}
\begin{algorithm}[t]
\begin{algorithmic}
   \caption{$\lambda$ = UPDATE$(\lambda,\bs,\bh,N,\by^{\top}\by)$}
   \label{alg:update}
   \STATE $\gamma \leftarrow \sum_{d=1}^D \frac{s_d}{\lambda + s_d}$
   \STATE $\beta \leftarrow N / (\by^{\top}\by - \sum_{d=1}^D \frac{h_d^2}{\lambda + s_d})$
   \STATE $\bm^{\top} \bm \leftarrow \sum_{d=1}^D \frac{h_d^2}{(\lambda + s_d)^2}$
   \STATE $\lambda \leftarrow \frac{\gamma }{\beta ~ \bm^{\top}\bm}$
   \STATE {\bfseries return} $\lambda$
\end{algorithmic}
\end{algorithm}

We accelerate the fixed point update rule in \eqref{eq:update_lambda} by using Aitken's delta-squared process~\cite{AitkenAC1927PRSE}.
Figure~\ref{fig:aitken} illustrates the Aitken's delta-squared process.
Let's focus on the two points $(\lambda_0, f(\lambda_0))$ and $(\lambda_1, f(\lambda_1))$,
and line going through these two points.
The equation of this line is
\be
\label{eq:line}
y = \lambda_1 + (\lambda - \lambda_0) \frac{\lambda_2-\lambda_1}{\lambda_1 - \lambda_0},
\ee
where $f(\lambda_0)$ and $f(\lambda_1)$ are replaced by $\lambda_1$ and $\lambda_2$, respectively.
The idea behind Aitken's method is to approximate fixed point $\lambda^{\ast}$ using the intersection of the line in \eqref{eq:line} with line $y=\lambda$, which is given by
\be
\label{eq:aitken}
\lambda = \lambda_0 - \frac{(\lambda_1 - \lambda_0)^2}{(\lambda_2-\lambda_1)-(\lambda_1 - \lambda_0)}. 
\ee

Our fast Bayesian learning algorithm for the regularized least squares problem in \eqref{eq:ls}
is summarized in Algorithm~\ref{alg:FBLS}.
In our algorithm, we first compute the eigen-decomposition of $\bX\bX^{\top}$. 
This is the most time consuming part but needs to be performed only once since the result can be reused for every label in $\by$.
After that, we obtain the regularization parameter $\lambda$ through an iterative procedure.

When we apply the Aitken's delta-squared process, we have two potential failure cases as in Figure~\ref{fig:aitken-false}(a) and \ref{fig:aitken-false}(b).
The first case often arises if the initial $\lambda_0$ is far from the fixed point $\lambda^{\ast}$, and the second case occurs when the approximating line in \eqref{eq:line} is parallel to $y=\lambda$.
Fortunately, these failures rarely happen in practice and can be handled easily by skipping the procedure in \eqref{eq:aitken} and updating $\lambda$ with $\lambda_2$.
\begin{figure}[t]
\centering
\includegraphics[width=0.495\linewidth]{./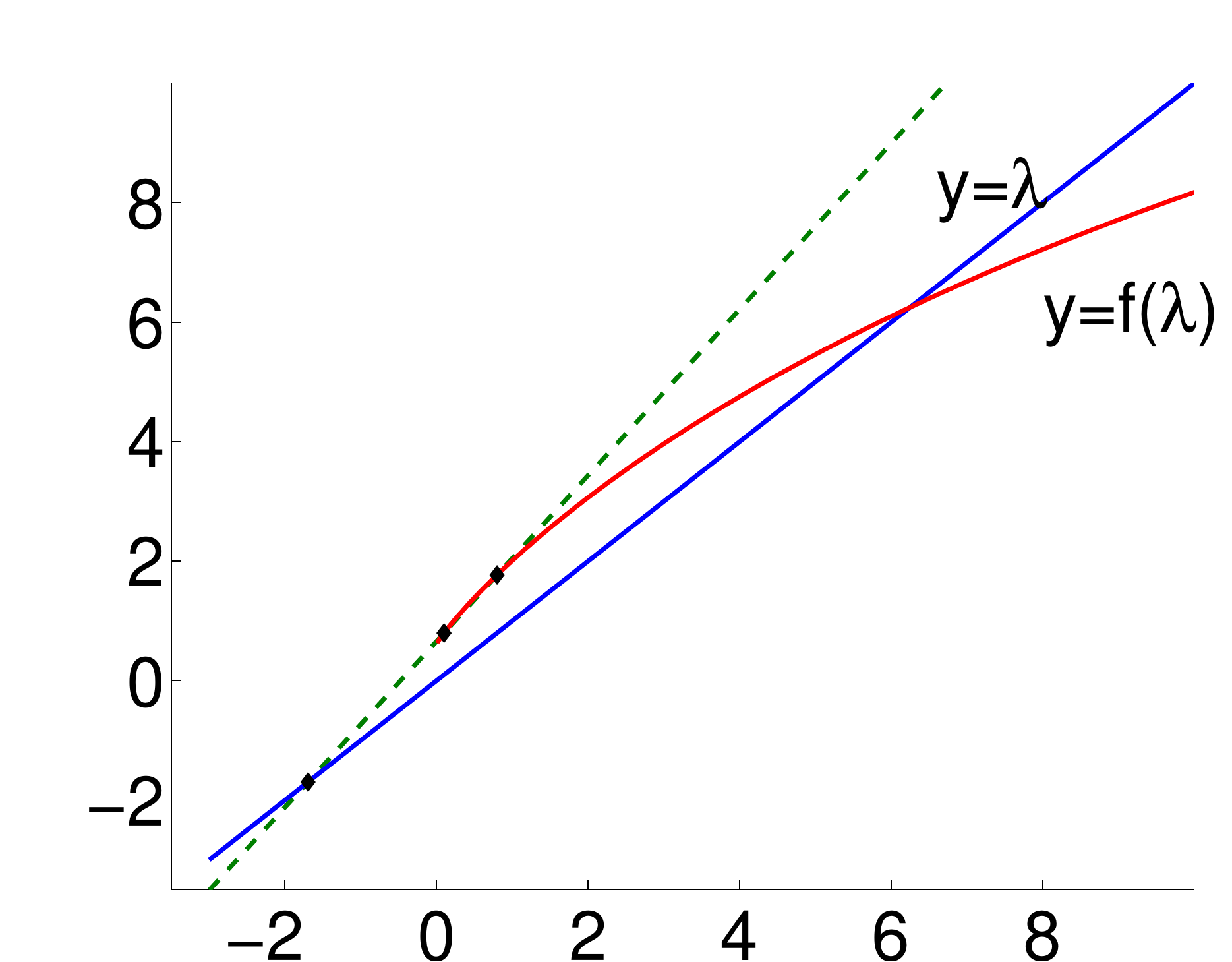}
\includegraphics[width=0.495\linewidth]{./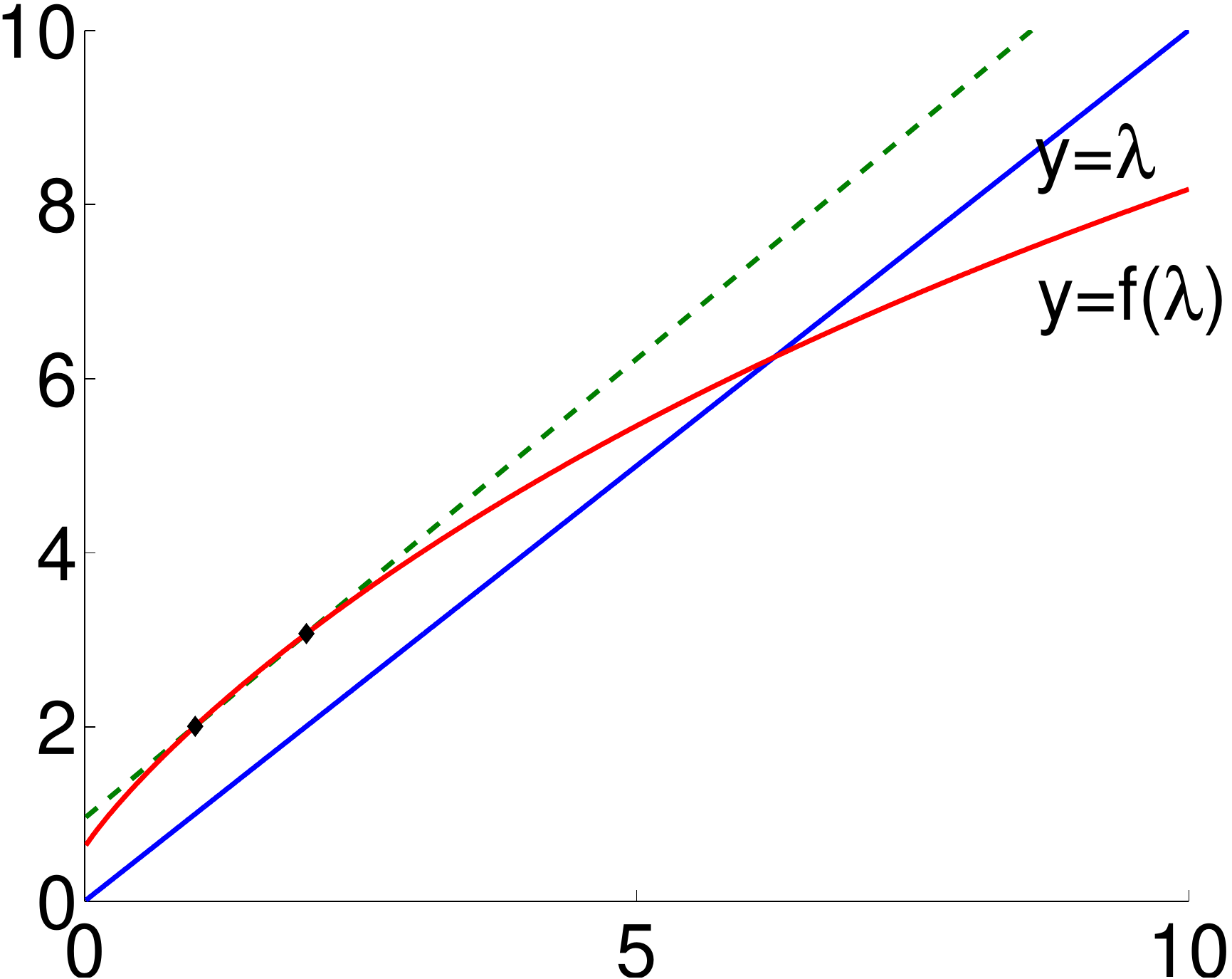}
\caption{Two failure cases of Aitken's delta-squared process. (left) The first case arises 
if initial $\lambda_0$ is far from the fixed point $\lambda^{\star}$, which results in $\lambda <0$.
 (right) The second case occurs when approximating line (dashed green) is parallel to $y=\lambda$, where $\lambda=\pm\infty$.}
\label{fig:aitken-false}
\end{figure}
\begin{figure}[t]
\begin{center}
\centerline{\includegraphics[width=0.9\columnwidth]{./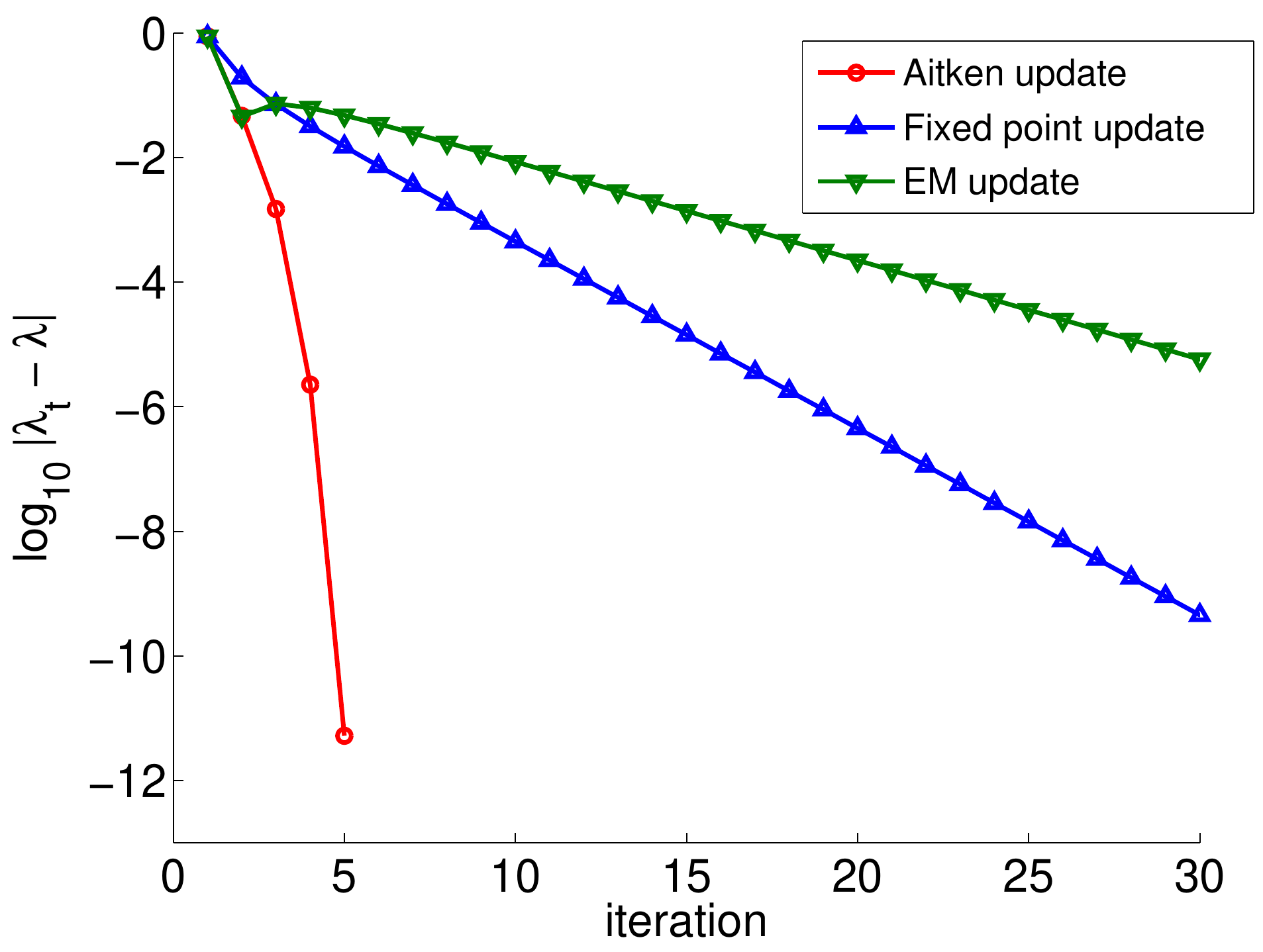} }
\caption{Comparison between Aitken's delta-squared process, fixed point update rules, 
and EM update rules on PASCAL VOC 2012 dataset (class = \emph{aeroplane}).
Aitken's delta-squared process significantly faster than other methods.}
\label{fig:speed}
\end{center}
\vskip -0.2in
\end{figure}

Figure~\ref{fig:speed} demonstrates the relative convergence rates of three different techniques---Aitken's delta-squared process in Algorithm \ref{alg:FBLS}, fixed point update rules in \eqref{eq:alpha_beta_fixed}, and EM update method, 
where the Aitken's delta-squared process is significantly faster than others for convergence.


\section{Experiments}
\label{sec:experiments}
We present the details of our experiment setting and the performance of our algorithm compared to the state-of-the-art techniques in 12 visual recognition benchmark datasets.

\subsection{Datasets and Image Representation}
The benchmark datasets involve various visual recognition tasks such as object recognition, photo annotation, scene recognition, fine grained recognition, visual attribute detection, and action recognition.
Table~\ref{tab:data} presents the characteristics of the datasets.
In our experiment, we followed the given train and test split and evaluation measure of each dataset.
For the datasets with bounding box annotations such as CUB200-2011, UIUC object attribute, Human attribute, and Stanford 40 actions, we enlarged the bounding boxes by 150\% to consider neighborhood context as suggested in \cite{RazavianAS2014cvpr,AzizpourH2014arxiv}.

\begin{table*}[t]
\caption{Characteristics of the 12 datasets. $N_1$: number of training data, $N_2$: number of test data, $K$: number of classes, $L$: average number of labels per image, AP: average precision, Acc.: accuracy, AUC: area under the ROC curve.}
\label{tab:data}
\begin{center}
\begin{small}
\begin{tabular}{l|lrrrrcl}
\belowspace
Dataset & Task & $N_1$ & $N_2$ & $K$ & $L$ & Box  & Measure \\ \hline
\abovespace
PASCAL VOC 2007~\cite{pascal-voc-2007} & object recognition
& 5011 & 4952 & 20 & 1.5 &  & mean AP \\	
PASCAL VOC 2012~\cite{pascal-voc-2012} & object recognition
& 5717 & 5823 & 20 & 1.5 &  & mean AP \\	
Caltech 101~\cite{Fei2007cviu} & object recognition
& 3060 & 6086 & 102 & 1 &  & mean Acc. \\
Caltech 256~\cite{GriffinG2007tr} & object recognition
& 15420 & 15187 & 257 & 1 &  & mean Acc. \\
ImageCLEF 2011~\cite{NowakS2011clef} & photo annotation 
& 8000 & 10000 & 99 & 11.9 &  & mean AP \\
MIT Indoor Scene~ ~\cite{QuattoniA2009cvpr} & scene recognition
& 5360 & 1340 & 67 & 1 &  & mean Acc. \\
SUN 397 Scene~\cite{XiaoJ2010cvpr} & scene recognition
& 19850 & 19850 & 397 & 1 & & mean Acc. \\
CUB 200-2011~\cite{WahC2011tr} & fine-grained recognition 
& 5994 & 5794 & 200 & 1 & $\surd$  & mean Acc.\\
Oxford Flowers~ ~\cite{NilsbackME2008iccvgip} & fine-grained recognition
& 2040 & 6149 & 200 & 1 & & mean Acc.\\
UIUC object attributes~\cite{FarhadiA2009cvpr} & attribute detection 
& 6340 & 8999 & 64 & 7.1 & $\surd$  & mean AUC \\
Human attributes~\cite{BourdevLD2011iccv} & attribute detection
& 4013 & 4022 & 9 & 1.8 & $\surd$ & mean AP \\
\belowspace
Stanford 40 actions~\cite{YaoB2011iccv} & action recognition
& 4000 & 5532 & 40 & 1 & $\surd$ & mean AP \\
\hline
\end{tabular}
\end{small}
\end{center}
\end{table*}

For deep learning representations, we selected 4 pre-trained CNNs from the Caffe Model Zoo: \emph{GoogLeNet}~\cite{WuZ2014tr}, \emph{VGG19}~\cite{SimonyanK2014arxiv}, and \emph{AlexNet}~\cite{DonahueJ2014icml} trained on ImageNet, and \emph{GoogLeNet} trained on Places ~\cite{WuZ2014tr}.
As generic image representations, we used the 4096 dimensional activations of the first fully connected layer in VGG19 and AlexNet and the 1024 dimensional vector obtained from the global average pooling layer located right before the final softmax layer in GoogLeNet.

Our implementation is in Matlab2011a, and all experiments were conducted on a quad-core Intel(R) core(TM) i7-3820 @ 3.60GHz processor.

%
%
%
%

\subsection{Bayesian LS-SVM vs. SVM}
We first compare the performance of our Bayesian LS-SVM with the standard SVM when they are applied to deep CNN features for visual recognition problems.
We used only a single image scale $256 \times 256$ in this experiment.
LIBLINEAR~\cite{FanRE2008jmlr} package is used for SVM training and the regularization parameters are selected by grid search with cross validations.

Table~\ref{tab:Bayes_vs_SVM} presents the complete results of our experiment.
Bayesian LS-SVM is competitive to SVM in terms of prediction accuracy even with significantly reduced training time.
Training SVM is getting slower than Bayesian LS-SVM as the number of classes increases so it is particularly slow in Caltech 256 and SUN 397 datasets.
%

\begin{table*}[t]
\caption{Bayesian LS-SVM versus SVM.
Without time consuming cross validation procedure, Bayesian LS-SVM achieves prediction accuracy competitive to SVM.
In addition, Bayesian LS-SVM selects the proper CNN for each task by using the evidence
(see bold-faced numbers).
Best accuracy in LS-SVM and SVM denotes the maximum achievable accuracy in test dataset using all available learned models. 
Note that the selected model by Bayesian evidence framework or cross validation may not be the best one in testing.
The following sets of regularization parameters are tested for cross validation in LS-SVM and SVM, respectively: $\{2^{-10}, 2^{-9}, \dots, 1, \dots, 2^9, 2^{10} \}$ and $\{ 0.01, 0.05, 0.1, 0.5, 1, 2, 5, 10 \}$.
($\mathsf{G_I}$: {\it GoogLeNet-ImageNet}, $\mathsf{G_p}$: {\it GoogLeNet-Place}, $\mathsf{V}$: {\it VGG19}, and $\mathsf{A}$: {\it AlexNet})}
\label{tab:Bayes_vs_SVM}

\begin{center}
\begin{scriptsize}

\newcommand{\specialcell}[2][c]{%
\begin{tabular}[#1]{@{}c@{}}#2\end{tabular}}

\setlength\tabcolsep{4.5pt} 
\begin{tabular}{c|c|crr|cr|c|cr||c|crr|cr|c|cr}
&\multicolumn{6}{c|}{LS-SVM} & \multicolumn{3}{c||}{SVM} &
\multicolumn{6}{c|}{LS-SVM} & \multicolumn{3}{c}{SVM} \\ \cline{2-19}
\abovespace
& &  \multicolumn{3}{c|}{Bayesian} & \multicolumn{2}{c|}{\specialcell{CV (5 folds)}} & & \multicolumn{2}{c||}{\specialcell{CV (5 folds)}}&
&  \multicolumn{3}{c|}{Bayesian} & \multicolumn{2}{c|}{\specialcell{CV (5 folds)}} & & \multicolumn{2}{c}{\specialcell{CV (5 folds)}} \\ 
CNN  & Best & Acc. & Evidence~~ & Time & Acc. & Time  & Best & Acc. & Time & 
Best & Acc. & Evidence~~ & Time & Acc. & Time  & Best & Acc. & Time \\
\hline
&\multicolumn{9}{c||}{PASCAL VOC 2007~\cite{pascal-voc-2007}}& \multicolumn{9}{c}{SUN-397~\cite{XiaoJ2010cvpr}} \\ \cline{2-19} 
$\mathsf{G_I}$ & 85.3 & 85.2 & 46.9  $\times 10^3$  & 1.1 & 85.2 & 8.4& 85.0 & 84.7 & 122.4
& 48.1 & 47.0 & 12.8  $\times 10^6$  & 3.1 & 48.1 & 36.5& 54.2 & 54.2 & 8739.6 \\ 
$\mathsf{G_P}$ & 74.1 & 73.8 & 38.6  $\times 10^3$  & 1.0 & 74.0 & 8.1& 74.1 & 73.9 & 144.3
& 61.1 & \textbf{60.1} & \textbf{13.2}  $\mathbf{\times 10^6}$  & 2.9 & 61.1 & 34.4& 63.3 & 63.3 & 8589.4\\ 
$\mathsf{V}$ & 85.9 & \textbf{85.8} & \textbf{48.0}  $\mathbf{\times 10^3}$  & 41.9 & 85.8 & 172.2& 85.9 & 85.8 & 257.5 
&  55.0 & 53.7 & 12.9  $\times 10^6$  & 57.4 & 54.9 & 419.8& 57.1 & 57.1 & 20254.0\\ 
$\mathsf{A}$ & 75.2 & 75.0 & 32.5  $\times 10^3$  & 41.7 & 75.0 & 160.4& 75.3 & 75.2 & 211.1
& 45.4 & 44.9 & 12.7  $\times 10^6$  & 50.8 & 45.4 & 419.0 & 48.6 & 48.6 & 10781.8 \\ \hline
&\multicolumn{9}{c||}{PASCAL VOC 2012~\cite{pascal-voc-2012}}& \multicolumn{9}{c}{CUB-200~\cite{WahC2011tr}} \\ \cline{2-19} 
$\mathsf{G_I}$ & 84.4 & 84.3 & 51.3  $\times 10^3$  & 1.2 & 84.3 & 8.6& 83.9 & 83.7 & 140.8 
& 65.2 & 64.3 & 15.6  $\times 10^5$  & 1.3 & 64.1 & 11.0& 67.6 & 56.5 & 1201.9 \\
$\mathsf{G_P}$ & 73.2 & 72.9 & 40.6  $\times 10^3$  & 1.1 & 73.1 & 8.4& 73.2 & 73.1 & 170.7
& 16.4 & 13.6 & 14.9  $\times 10^5$  & 1.5 & 15.0 & 11.1& 16.8 & 11.1 & 1664.6 \\
$\mathsf{V}$ &  85.2 & \textbf{85.1} & \textbf{52.9}  $\mathbf{\times 10^3}$  & 42.7 & 85.2 & 161.5& 85.6 & 85.4 & 295.9
& 69.2 & \textbf{68.6} & \textbf{15.8} $\mathbf{\times 10^5}$  & 44.1 & 61.5 & 259.2& 71.1 & 59.4 & 2776.2 \\
$\mathsf{A}$ & 74.1 & 73.9 & 34.3  $\times 10^3$  & 42.7 & 74.0 & 161.8& 74.4 & 74.3 & 160.7
& 59.0 & 58.5 & 15.5  $\times 10^5$  & 45.3 & 46.6 & 257.9& 61.4 & 51.6 & 1645.5\\ \hline 
&\multicolumn{9}{c||}{Caltech 101~\cite{Fei2007cviu}}& \multicolumn{9}{c}{Oxford Flowers~\cite{NilsbackME2008iccvgip}} \\ \cline{2-19} 
$\mathsf{G_I}$ & 90.6 & 90.0 & 37.8  $\times 10^4$  & 1.0 & 89.6 & 6.0& 91.4 & 85.1 & 325.0 
& 85.5 & 84.7 & 21.8  $\times 10^4$  & 0.9 & 82.0 & 5.5& 87.4 & 72.0 & 198.8 \\ 
$\mathsf{G_P}$ & 57.0 & 54.3 & 30.6  $\times 10^4$  & 0.9 & 55.1 & 5.9& 57.2 & 41.8 & 390.3
& 55.6 & 51.7 & 19.4  $\times 10^4$  & 0.9 & 51.8 & 5.5& 57.1 & 32.8 & 234.7\\ 
$\mathsf{V}$ & 92.2 & \textbf{92.1} & \textbf{40.9}  $\mathbf{\times 10^4}$  & 31.5 & 88.8 & 142.7& 92.2 & 86.8 & 729.4
& 87.5 & 87.1 & 22.5  $\times 10^4$  & 26.9 & 82.1 & 142.2& 87.6 & 73.4 & 520.9\\ 
$\mathsf{A}$ & 89.3 & 89.2 & 37.3  $\times 10^4$  & 32.0 & 83.4 & 146.9& 90.0 & 83.5 & 595.3
& 87.6 & \textbf{87.6} & \textbf{22.9}  $\mathbf{\times 10^4}$  & 27.3 & 81.8 & 146.7& 88.3 & 77.1 & 271.3\\ \hline 
&\multicolumn{9}{c||}{Caltech 256~\cite{GriffinG2007tr}}& \multicolumn{9}{c}{UIUC Attributes~\cite{FarhadiA2009cvpr}} \\ \cline{2-19} 
$\mathsf{G_I}$ & 77.8 & 77.2 & 59.9  $\times 10^5$  & 2.3 & 77.8 & 21.8& 81.2 & 81.2 & 4060.4
& 91.5 & 90.3 & 13.5  $\times 10^4$  & 1.4 & 90.9 & 8.0& 91.3 & 90.6 & 605.5\\
$\mathsf{G_P}$ & 44.9 & 42.6 & 55.9  $\times 10^5$  & 2.2 & 44.9 & 21.2& 48.6 & 48.6 & 4991.8
& 87.8 & 86.6 & 10.5  $\times 10^4$  & 1.3 & 87.1 & 7.4& 88.0 & 87.6 & 726.0 \\
$\mathsf{V}$ & 82.0 & \textbf{81.1} & \textbf{62.3}  $\mathbf{\times 10^5}$  & 52.5 & 81.7 & 339.7& 82.7 & 82.7 & 9653.1
& 92.5 & \textbf{91.1} & \textbf{14.4}  $\mathbf{\times 10^4}$  & 43.8 & 92.0 & 186.3& 92.2 & 91.7 & 1285.4 \\
$\mathsf{A}$ & 69.7 & 68.9 & 58.6  $\times 10^5$  & 52.9 & 69.7 & 336.9& 72.3 & 72.3 & 5348.6
& 91.4 & 89.9 & 12.9  $\times 10^4$  & 44.1 & 91.0 & 191.2& 90.8 & 90.5 & 683.7 \\ \hline 
&\multicolumn{9}{c||}{ImageCLEF~\cite{NowakS2011clef}}& \multicolumn{9}{c}{Human Attributes~\cite{BourdevLD2011iccv}} \\ \cline{2-19} 
$\mathsf{G_I}$ & 49.1 & 48.9 & 20.5  $\times 10^4$  & 1.5 & 48.8 & 37.0& 47.7 & 47.4 & 1218.6
& 76.0 & \textbf{75.8} & \textbf{-74.8}  $\mathbf{\times 10^2}$  & 1.0 & 75.8 & 5.0& 74.2 & 74.1 & 70.6 \\
$\mathsf{G_P}$ & 47.5 & 47.1 & 20.8  $\times 10^4$  & 1.4 & 47.1 & 36.9& 47.1 & 46.7 & 1410.5
& 58.7 & 58.4 & -103.1  $\times 10^2$  & 1.0 & 58.0 & 4.8& 56.9 & 56.5 & 85.5 \\
$\mathsf{V}$ &  50.7 & \textbf{50.3} & \textbf{21.3}  $\mathbf{\times 10^4}$  & 45.9 & 50.4 & 248.5& 50.4 & 50.1 & 2531.2
& 75.4 & 75.1 & -76.0  $\times 10^2$  & 40.3 & 75.2 & 124.2& 73.1 & 72.8 & 131.9 \\
$\mathsf{A}$ & 44.8 & 44.6 & 18.7  $\times 10^4$  & 46.1 & 44.6 & 245.9& 44.4 & 44.1 & 2140.0 
& 71.9 & 71.3 & -84.4  $\times 10^2$  & 40.7 & 71.7 & 121.2& 70.0 & 69.9 & 63.3\\ \hline 
&\multicolumn{9}{c||}{MIT Indoor~\cite{QuattoniA2009cvpr}}& \multicolumn{9}{c}{Stanford 40 Action~\cite{YaoB2011iccv}} \\ \cline{2-19} 
$\mathsf{G_I}$ & 66.7 & 66.0 & 30.1  $\times 10^4$  & 1.2 & 66.7 & 5.8& 69.4 & 69.2 & 400.9
& 70.2 & 69.8 & 100.4  $\times 10^3$  & 1.0 & 69.6 & 11.6& 69.8 & 69.6 & 211.7\\
$\mathsf{G_P}$ & 80.0 & \textbf{79.9} & \textbf{35.2}  $\mathbf{\times 10^4}$  & 1.1 & 80.0 & 5.8& 81.1 & 80.4 & 402.5 
& 48.3 & 47.6 & 86.5  $\times 10^3$  & 1.1 & 47.9 & 11.4& 48.2 & 47.7 & 246.2 \\
$\mathsf{V}$ & 73.2 & 73.1 & 31.1  $\times 10^4$  & 42.6 & 73.2 & 186.8& 74.7 & 74.7 & 895.5
& 75.4 & \textbf{75.2} & \textbf{109.3}  $\mathbf{\times 10^3}$  & 41.1 & 75.1 & 142.9& 75.8 & 75.3 & 418.7\\
$\mathsf{A}$ & 62.0 & 61.1 & 28.6  $\times 10^4$  & 42.2 & 60.5 & 187.4& 63.1 & 63.1 & 460.9
& 58.0 & 57.7 & 89.6  $\times 10^3$  & 41.5 & 57.5 & 156.5& 57.4 & 57.1 & 206.8 \\ \hline
\end{tabular}
\end{scriptsize}
\end{center}
\vskip -0.1in
\end{table*}

Another notable observation in Table~\ref{tab:Bayes_vs_SVM} is that the order of prediction accuracy is highly correlated to the evidence.
This means that the selected model by Bayesian LS-SVM produces reliable testing accuracy and a proper deep learning image representation is obtained without time consuming grid search and cross validation.
Note that cross validations in LS-SVM and SVM play the same role, but are less reliable and slower than our Bayesian evidence framework.
The capability to select the appropriate CNN model and the corresponding regularization parameter is one of the most important properties of our algorithm.

\subsection{Comparison with Other Methods}
We now show that our Bayesian LS-SVM identifies a combination of multiple CNNs to improve accuracy without grid search and cross validation.
For each task, we select a subset of 4 pre-trained CNNs in a greedy manner; we add CNNs to our selection, one by one, until the evidence does not increase.
Our algorithm is compared with DeCAF~\cite{DonahueJ2014icml}, Zeiler~\cite{ZeilerMD2014eccv}, INRIA~\cite{OquabM2014cvpr}, KTH-S~\cite{RazavianAS2014cvpr}, KTH-FT~\cite{AzizpourH2014arxiv}, VGG~\cite{SimonyanK2014arxiv}, Zhang~\cite{ZhangN2014cvpr,ZhangN2014eccv}, and TUBFI~\cite{binder2011joint}.
In addition, our ensembles identified by greedy evidence maximization are compared with the oracle combinations---the ones with the highest accuracy in test set found by exhaustive search---and the best combinations found by exhaustive evidence maximization.

Table~\ref{tab:state-of-the-art} presents that our ensembles approach achieves the best performance in most of the 12 tasks.
The identified ensembles by the greedy approach are consistent with the selections by exhaustive evidence maximization and even oracle selections\footnote{This option is practically impossible since it requires evaluation with test dataset using all available models for model selection.} made by testing accuracy maximization.
Note that our network selections are natural and reasonable; GoogLeNet-ImageNet and VGG19 are selected frequently while GoogLeNet-Place is preferred to GoogLeNet-ImageNet in MIT Indoor and SUN-397 since the datasets are constructed for scene recognition.
It turns out that the proposed algorithm tends to choose the networks with higher accuracies in the target task even though it makes selections based only on the evidence in a greedy manner.
An interesting observation is that our result is less consistent with the selections by oracle and exhaustive evidence maximization in Stanford 40 Actions dataset, where GoogLeNet-Place seems to provide complementary information even with its low accuracy and is helpful to improve recognition performance.
It is probably because actions are frequently performed at typical places, \eg, a fair portion of images in \emph{brushing teeth} class are taken from bathrooms.


  
%
\begin{table*}[ht] 
\vskip -0.1in
\caption{Comparison to existing methods in the 12 benchmark datasets. 
The best ensembles identified by maximizing evidence through exhaustive search mostly coincide with the oracle combinations---the ones with the highest accuracy in test set, which is also found by exhaustive search.
The ensembles identified by our greedy search are very similar to the ones by these exhaustive search methods, and our algorithm consequently performs best in many tested datasets.
We used three scales $\{256, 384,512\}$ as done in \cite{SimonyanK2014arxiv},
where we simply averaged the prediction scores from three scales. 
}
\label{tab:state-of-the-art}
\begin{center} 
\begin{footnotesize}
\begin{tabular}{l|cccccccccccc}
Method & VOC07 & VOC12 & CAL101 & CAL256 & CLEF & MIT & SUN & Birds & Flowers & UIUC & Human & Action \\
\hline
\abovespace
DeCAF  & - & - & 86.9 & - & - & - & 38.0 & 65.0 & - & - & - & - \\
Zeiler & - & 79.0 & 86.5 & 74.2 & - & - & - & -& -& -& -& - \\
INRIA & 77.7 & 82.8  &-& - & - &- & - &-&-&-&-&-\\
KTH-S & 71.8 & - & -&- & - & 64.9 & 49.6 & 62.8 & 90.5 & 90.6 & 73.8 & 58.9 \\
KTH-FT & 80.7 & &-&-- & - & 71.3 & 56.0 & 67.1 & 91.3 & 91.5 & 74.6 & 66.4\\
VGG & 89.7 & 89.3 &  92.7 & \textbf{86.2} & - & - & - &- & - & -& -& - \\
Zhang &-&-& - & - & - & - & - & 76.4 & - & - & 79.0 & - \\ 
TUBFI &-&-& - & - & 44.3 & - & - & - & - & - & - & - \\ \hline
\abovespace
$\mathsf{G_I}$ & 87.5 & 86.2 & 90.5 & 77.7 & 50.3 & 71.3 & 48.3 & 64.7 & 88.1 & 91.1 & 78.4 & 71.0\\
$\mathsf{G_P}$ & 75.7 & 74.9 & 53.8 & 42.1 & 48.1 & 80.8 & 59.8 & 14.9 & 57.8 & 87.3 & 59.7 & 48.4\\
$\mathsf{V}$ & 88.4 & 87.8 & 93.3 & 83.3 & 52.4 & 77.8 & 56.1 & 69.9 & 91.5 & 91.8 & 79.1 & 77.0 \\
$\mathsf{A}$ & 75.0 & 73.9 & 88.3 & 69.7 & 52.3 & 77.5 & 42.4 & 60.7 & 86.7 & 89.9 & 71.3 & 57.7\\ \hline
\abovespace
Oracle & $\mathsf{G_I G_P V}$ & $\mathsf{G_I G_P V}$ & $\mathsf{G_I V A}$ & $\mathsf{G_I G_P V A}$ & $\mathsf{G_I G_P V A}$ & $\mathsf{G_P V A}$ & $\mathsf{G_I G_P V A}$ & $\mathsf{G_I V A}$ & $\mathsf{G_I G_P V A}$ & $\mathsf{G_I V A}$ & $\mathsf{G_I V A}$ & $\mathsf{G_I G_P V}$ \\
(exhaustive) & 90.0 & 89.4 & 95.3 & 86.1 & 55.7 & 84.9 & 67.5 & 77.3 & 94.7 & 92.0 & 80.8 & 78.6 \\ \hline 
\abovespace
Max evid. & $\mathsf{G_I G_P V}$ & $\mathsf{G_I G_P V}$ & $\mathsf{G_I V A}$ & $\mathsf{G_I G_P V A}$ & $\mathsf{G_I G_P V}$ & $\mathsf{G_P V}$ & $\mathsf{G_P V}$ & $\mathsf{G_I V A}$ & $\mathsf{G_I V A}$ & $\mathsf{G_I G_P V A}$ & $\mathsf{G_I V A}$ & $\mathsf{G_I G_P V}$ \\
(exhaustive) & 90.0 & 89.4 & 95.3 & 86.1 & 55.5 & 84.7 & 67.5 & 77.3 & 94.5 & 92.0 & 80.8 & 78.6 \\ \hline 
\abovespace
Ours & $\mathsf{G_I G_P V}$ & $\mathsf{G_I G_P V}$ & $\mathsf{G_I V A}$ & $\mathsf{G_I G_P V A}$ & $\mathsf{G_I G_P V}$ & $\mathsf{G_P V}$ & $\mathsf{G_P V}$ & $\mathsf{G_I V A}$ & $\mathsf{G_I V A}$ & $\mathsf{G_I G_P V A}$ & $\mathsf{G_I V A}$ & $\mathsf{G_I V A}$\\ 
(greedy)  & \textbf{90.0} & \textbf{89.4} & \textbf{95.3} & 86.1 & \textbf{55.5} & \textbf{84.7} & \textbf{67.5} & \textbf{77.3} & \textbf{94.5} & \textbf{92.0} & \textbf{80.8} & \textbf{77.8} \\   \hline
\end{tabular}
\end{footnotesize}
\end{center}
\vskip -0.1in
\end{table*}

\section{Conclusion}
We described a simple and efficient technique to transfer deep CNN models pre-trained on specific image classification tasks to another tasks.
Our approach is based on Bayesian LS-SVM, which combines Bayesian evidence framework and SVM with a least squares loss.
In addition, we presented a faster fixed point update rule for evidence maximization through Aitken's delta-squared process.
Our fast Bayesian LS-SVM demonstrated competitive results compared to the standard SVM by selecting a deep CNN model in 12 popular visual recognition problems.
We also achieved the state-of-the-art performance by identifying a good ensemble of the candidate models through our Bayesian LS-SVM framework.

\subsection*{Acknowledgements}

This work was partly supported by Institute for Information \& Communications Technology Promotion (IITP) grant funded by the Korea government (MSIP) [B0101-16-0307; Basic Software Research in Human-level Lifelong Machine Learning 
(Machine Learning Center), B0101-16-0552; Development of Predictive Visual Intelligence Technology (DeepView)], and National Research Foundation (NRF) of Korea [NRF-2013R1A2A2A01067464].

{\small
\bibliographystyle{ieee}
\bibliography{sjc}

\begin{thebibliography}{10}\itemsep=-1pt

\bibitem{AitkenAC1927PRSE}
A.~C. Aitken.
\newblock On {Bernoulli's} numerical solution of algebraic equations.
\newblock {\em Proceedings of the Royal Society of Edinburgh}, 46:289--305,
  1927.

\bibitem{AzizpourH2014arxiv}
H.~Azizpour, A.~S. Razavian, J.~Sulivan, A.~Maki, and S.~Carlsson.
\newblock From generic to specific deep representations for visual recognition.
\newblock In {\em CVPR Workshops}, 2015.

\bibitem{binder2011joint}
A.~Binder, W.~Samek, M.~Kloft, C.~M{\"u}ller, K.-R. M{\"u}ller, and
  M.~Kawanabe.
\newblock The joint submission of the {TU Berlin and Fraunhofer FIRST}
  ({TUBFI}) to the {ImageCLEF}2011 photo annotation task.
\newblock 2011.

\bibitem{BishopCM95book}
C.~M. Bishop.
\newblock {\em Neural Networks for Pattern Recognition}.
\newblock Clarendon press Oxford, 1995.

\bibitem{BourdevLD2011iccv}
L.~D. Bourdev, S.~Maji, and J.~Malik.
\newblock Describing people: A poselet-based approach to attribute
  classification.
\newblock In {\em ICCV}, 2011.

\bibitem{Chatfield2014bmvc}
K.~Chatfield, K.~Simonyan, A.~Vedaldi, and a.~Zisserman.
\newblock Return of the devil in the details: Delving deep into convolutional
  nets.
\newblock In {\em BMVC}, 2014.

\bibitem{DonahueJ2014icml}
J.~Donahue, Y.~Jia, O.~Vinyals, J.~Hoffman, n.~Zhang, E.~Tzeng, and T.~Darrell.
\newblock {DeCAF}: A deep convolutional activation feature for generic visual
  recognition.
\newblock In {\em ICML}, 2014.

\bibitem{pascal-voc-2007}
M.~Everingham, L.~V. Gool, C.~K.~I. Williams, J.~Winn, and A.~Zisserman.
\newblock The {PASCAL} {V}isual {O}bject {C}lasses {C}hallenge 2007 {(VOC
  2007)} {R}esults, 2007.

\bibitem{pascal-voc-2012}
M.~Everingham, L.~V. Gool, C.~K.~I. Williams, J.~Winn, and A.~Zisserman.
\newblock The {PASCAL} {V}isual {O}bject {C}lasses {C}hallenge 2012 {(VOC
  2012)} {R}esults, 2012.

\bibitem{FanRE2008jmlr}
R.~E. Fan, K.~W. Chang, C.~J. Hsieh, X.~R. Wang, and C.~J. Lin.
\newblock {LIBLINEAR}: A library for large linear classification.
\newblock {\em JMLR}, 9:1871--1874, 2008.

\bibitem{FarhadiA2009cvpr}
A.~Farhadi, I.~Endres, D.~Hoiem, and D.~A. Forsyth.
\newblock Describing objects by their attributes.
\newblock In {\em CVPR}, 2009.

\bibitem{Fei2007cviu}
L.~Fei-Fei, R.~Fergus, and P.~Perona.
\newblock Learning generative visual models from few training examples: An
  incremental bayesian approach tested on 101 object categories.
\newblock {\em CVIU}, 106(1):59--70, 2007.

\bibitem{girshick2014cvpr}
R.~Girshick, J.~Donahue, T.~Darrell, and J.~Malik.
\newblock Rich feature hierarchies for accurate object detection and semantic
  segmentation.
\newblock In {\em CVPR}, pages 580--587. IEEE, 2014.

\bibitem{GriffinG2007tr}
G.~Griffin, A.~Holub, and P.~Perona.
\newblock Caltech-256 object category dataset.
\newblock Technical report, California Institute of Technology, 2007.

\bibitem{Nam16}
B.~H. H.~Nam.
\newblock Learning multi-domain convolutional neural networks for visual
  tracking.
\newblock In {\em CVPR}, 2016.

\bibitem{noh2015learning}
B.~H. H.~Noh, S.~Hong.
\newblock Learning deconvolution net- work for semantic segmentation.
\newblock In {\em ICCV}, 2015.

\bibitem{KrizhevskyA2012nips}
A.~Krizhevsky, I.~Sutskever, and G.~E. Hinton.
\newblock {ImageNet} classification wit deep convolutional neural networks.
\newblock In {\em NIPS}, volume~25, 2012.

\bibitem{MacKayDJC92nc}
D.~J.~C. MacKay.
\newblock Bayesian interpolation.
\newblock {\em Neural Computation}, 4(3):415--447, 1992.

\bibitem{NilsbackME2008iccvgip}
M.-E. Nilsback and A.~Zisserman.
\newblock Automated flower classification over a large number of classes.
\newblock In {\em Proceedings of the Indian Conference on Computer Vision,
  Graphics and Image Processing}, 2008.

\bibitem{NowakS2011clef}
S.~Nowak, K.~Nagel, and J.~Liebetrau.
\newblock The {CLEF} 2011 photo annotation and concept-based retrieval tasks.
\newblock In {\em CLEF Workshop Notebook Paper}, 2011.

\bibitem{OquabM2014cvpr}
M.~Oquab, L.~Bottou, I.~Laptev, and J.~Sivic.
\newblock Learning and transferring mid-level image representations using
  convolutional neural networks.
\newblock In {\em CVPR}, 2014.

\bibitem{QuattoniA2009cvpr}
A.~Quattoni and A.~Torrabla.
\newblock Recognizing indoor scenes.
\newblock In {\em CVPR}, 2009.

\bibitem{RazavianAS2014cvpr}
A.~S. Razavian, H.~Azizpour, J.~Sullivan, and S.~Carlsson.
\newblock {CNN} features off-the-shelf: An astounding baseline for recognition.
\newblock In {\em CVPR Workshops}, 2014.

\bibitem{SermanetP2014iclr}
P.~Sermanet, D.~Eigen, X.~Zhang, M.~Mathieu, R.~Fergus, and Y.~LeCun.
\newblock {OverFeat}: Integrated recognition, localization and detection using
  convolutional networks.
\newblock In {\em ICLR}, 2014.

\bibitem{SimonyanK2014arxiv}
K.~Simonyan and A.~Zisserman.
\newblock Very deep convolutional networks for large-scale image recognition.
\newblock In {\em ICLR}, 2015.

\bibitem{SuykensJAK99npl}
J.~A.~K. Suykens and J.~Vandewalle.
\newblock Least squares support vector machine classifiers.
\newblock {\em Neural Processing Letters}, 9(3):293--300, 1999.

\bibitem{SzegedyC2014arxiv}
C.~Szegedy, W.~Liu, Y.~Jia, P.~Sermanet, D.~A. S.~Reed, D.~Erhan, V.~Vanhoucke,
  and A.~Rabinovich.
\newblock Going deeper with convolutions.
\newblock In {\em CVPR}, 2015.

\bibitem{GestelTV2004mlj}
T.~{Van Gestel}, J.~A. K. S.~B. Baesems, S.~Viaene, J.~Vanthienen, G.~Dedene,
  B.~{De Moor}, and J.~Vandewalle.
\newblock Benchmarking least squares support vector machines classifiers.
\newblock {\em Machine Learning}, 54(1):5--32, 2004.

\bibitem{GestelTV2002nc}
T.~{Van Gestel}, J.~A.~K. Suykens, G.~Lanckrie, A.~Lambrechts, B.~D. Moor, and
  J.~Vandewalle.
\newblock Bayesian framework for least-squares support vector machine
  classifiers, gaussian processes, and kernel fisher discriminant analysis.
\newblock {\em Neural Computation}, 14(5):1115--1147, 2002.

\bibitem{WahC2011tr}
C.~Wah, S.~Branson, P.~Welinder, P.~Perona, and S.~Belongie.
\newblock The {Caltech-UCSD Birds-200-2011} dataset.
\newblock Technical report, California Institute of Technology, 2011.

\bibitem{WuZ2014tr}
Z.~Wu, Y.~Zhang, F.~Yu, and J.~Xiao.
\newblock A {GPU} implementation of {GoogLeNet}.
\newblock Technical report, Princeton University, 2014.

\bibitem{XiaoJ2010cvpr}
J.~Xiao, J.~Hays, K.~A. Ehinger, A.~Oliva, and A.~Torrabla.
\newblock {SUN} database: Large-scale scene recognition from abbey to zoo.
\newblock In {\em CVPR}, 2010.

\bibitem{YaoB2011iccv}
B.~Yao, X.~Jiang, A.~Khosla, A.~L. Lin, L.~J. Guibas, and L.~Fei-Fei.
\newblock Action recognition by learning bases of action attributes and parts.
\newblock In {\em ICCV}, 2011.

\bibitem{ZeilerMD2014eccv}
M.~D. Zeiler and R.~Fergus.
\newblock Visualizing and understanding convolutional networks.
\newblock In {\em ECCV}, 2014.

\bibitem{ZhangN2014cvpr}
N.~Zhang, , M.~Paluri, M.~Ranzato, T.~Darrell, and L.~Bourdev.
\newblock {PANDA}: Pose aligned networks for deep attribute modeling.
\newblock In {\em CVPR}, 2014.

\bibitem{ZhangN2014eccv}
N.~Zhang, J.~Donahue, R.~Girshick, and T.~Darrell.
\newblock Part-based {R-CNNs} for fine-grained category detection.
\newblock In {\em ECCV}, 2014.

\bibitem{ZhangP2004icpr}
P.~Zhang and J.~Peng.
\newblock {SVM} vs regularized least squares classification.
\newblock In {\em ICPR}, 2004.

\end{thebibliography}
}

\end{document}